\newtheorem{thm}{Theorem}
\newtheorem{lem}[thm]{Lemma}
\newtheorem{defn}{Definition}
\newtheorem{hypo}{Assumption}
\let\vec=\mathbf
\let\mat=\mathbf
\let\set=\mathcal
\def \saliency {\textup{\saliency}}
\def \path {\mathit{path}}
\def \label {\mathit{label}}
\newcommand{\comment}[1]{}
\newcommand{\degree}{\ensuremath{^\circ}}
\newcommand{\mypara}{\paragraph}
\newcommand{\denselist}{\itemsep 3pt\parsep=3pt\partopsep 3pt\vspace{-\topsep}}
\newcommand{\bitem}{\begin{itemize}\denselist}
	\newcommand{\eitem}{\end{itemize}}
\newcommand{\benum}{\begin{enumerate}\denselist}
	\newcommand{\eenum}{\end{enumerate}}
\newcommand{\bdescr}{\begin{description}\denselist}
	\newcommand{\edescr}{\end{description}}
\begin{document}

\title{3D-Assisted Image Feature Synthesis for Novel Views of an Object}

\author{Hao Su\footnote{indicates equal contributions.}\qquad Fan Wang$^{*}$ \qquad Li Yi \qquad Leonidas Guibas\\
{Stanford University}
}

\maketitle

\begin{abstract}
Comparing two images in a view-invariant way has been a challenging problem in computer vision for a long time, as visual features are not stable under large view point changes. In this paper, given a single input image of an object, we synthesize new features for other views of the same object. To accomplish this, we introduce an aligned set of 3D models in the same class as the input object image. Each 3D model is represented by a set of views, and we study the correlation of image patches between different views, seeking what we call surrogates --- patches in one view whose feature content predicts well the features of a patch in another view. In particular, for each patch in the novel desired view, we seek surrogates from the observed view of the given image. For a given surrogate, we predict that surrogate using linear combination of the corresponding patches of the 3D model views, learn the coefficients, and then transfer these coefficients on a per patch basis to synthesize the features of the patch in the novel view. In this way we can create feature sets for all views of the latent object, providing us a multi-view representation of the object. View-invariant object comparisons are achieved simply by computing the $L^2$ distances between the features of corresponding views. We provide theoretical and empirical analysis of the feature synthesis process, and evaluate the proposed view-agnostic distance (VAD) in fine-grained image retrieval (100 object classes) and classification tasks. Experimental results show that our synthesized features do enable view-independent comparison between images and perform significantly better than traditional image features in this respect.

\end{abstract}

\section{Introduction and Related Work}
Object recognition plays a key role in many applications and has been a central topic in the vision community. The fundamental question --- easy to state, albeit harder to formalize --- is: when do two or more images are said to be ``same'' or ``different''? Appearance differences of images can be factored as intrinsic or extrinsic. Intrinsic factors refer to properties of the imaged objects themselves, such as differences in the topology, geometry and material of the underlying or latent imaged shapes. Extrinsic factors on the other hand include illumination, viewpoint, and more generally visibility effects (occlusion, etc.). For object recognition and image understanding, it is a fundamental problem to find the ``best'' representation which only focuses on intrinsic object properties and is invariant to extrinsic factors (Fig.~\ref{fig:teaser}).

\begin{figure}
	\centering
\includegraphics[width=0.8\linewidth]{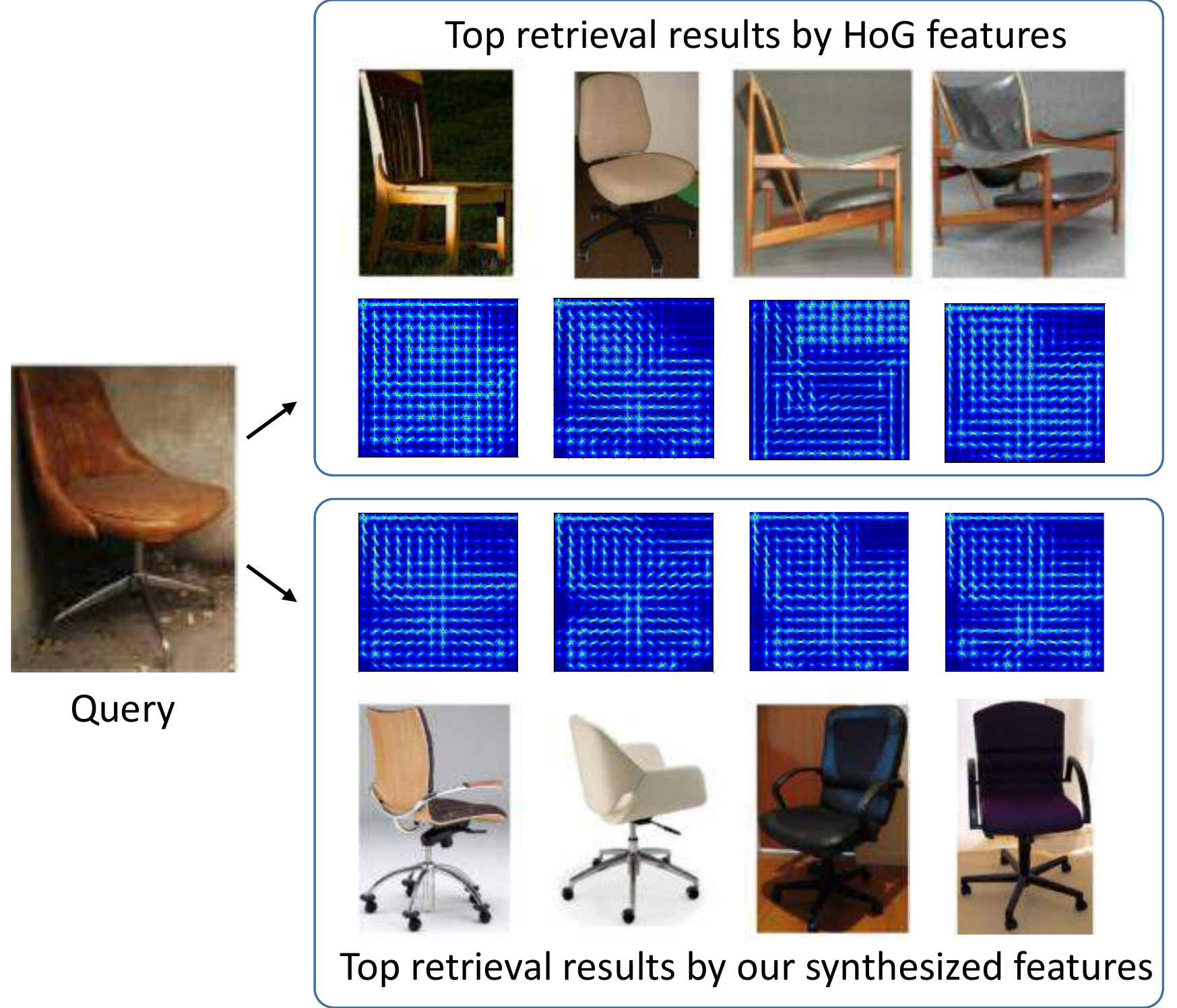}
\caption{{\bf View-invariant image retrieval.} \small Given each image, we synthesize its image features on a predefined list of viewpoints, and compare two images by their synthesized features from all viewpoints. Therefore, we can find images with similar objects regardless of viewpoint change. The first row shows the retrieval results by the HoG features on the original images only, and the second row visualizes the HoG features used for retrieval. The third row visualizes the synthesized HoG features, and the last row shows retrieval results using our synthesized features. Synthesized features are shown only on the view corresponding to the query image due to paper length constraint. }\label{fig:teaser}
\end{figure}

In the vision literature, substantial efforts have been made to achieve such invariant representations. Many kinds of view-invariant image features have been proposed for image comparison and recognition. The basic idea is to embed images into a \emph{common space}, so that the embedded point remains relatively stable as the viewpoint changes. The common space is usually built out of low-level image features with careful engineering~\cite{Lowe1999, bay2006surf, matas2004robust}. 
Recently, there have also been successful efforts that learn such view-invariant features from large image datasets~\cite{krizhevsky2012imagenet,le2011learning}. However, these methods can only achieve invariance for small viewpoint changes, typically not more than $15\degree$. Another option is to choose the common space to be a high-level conceptual representation, based on object class or attributes~\cite{deng2011hierarchical,lampert2009learning}. However, much detailed geometric and physical information about the object gets lost in such embedding processes.
There has also been  work on reconstructing 3D geometry from an image~\cite{haosu_sig14}; however, these algorithms still lack the ability to recover detailed information and do not scale well.

In this paper, we choose the common space to be the \emph{space of 3D shapes}, similar to \cite{haosu_sig14}. Objects in images are indeed 2D projections of \emph{latent} 3D shapes; therefore, if we can obtain the latent 3D shape for each image, then images can be compared in a view-agnostic manner. Compared with low-level features, this approach achieves view-invariance for arbitrary viewpoint changes. Compared with high-level representations, 3D shape space is closer to the physical form of objects and therefore preserves more basic and detailed information.

Reconstructing 3D geometry, however, is a really challenging task and unnecessary in many cases, for example, image comparisons are typically based on image feature sets. In this paper {\em we focus on reconstructing the features of different views of the latent shape of an imaged object} (Fig.~\ref{fig:teaser}). So effectively we choose a multi-view representation of a shape, also known as 2-1/2D representation~\cite{Savarese_ICCV2007_Multiview,chen2003visual}, where each 3D shape is described by a set of images rendered from a predefined list of viewpoints. The descriptor of the latent shape is just a concatenation of image features from each of the views (see \S\ref{sec:notation}). In this way, the problem of reconstructing the features of the latent shape can be formulated as: given an object image (one view), reconstruct its features from other novel views (in the desired view set).

This problem is very challenging, because it is naturally ill-posed. The input is only one single real image based on one view --- thus information seems to be missing for reconstruction from novel views, even if all we seek is features for the new views.
Therefore, we introduce a 3D shape collection from the same object class as a non-parametric prior.
The intuition of our proposed method is that, given the novel view, we find related parts in the observed view that can best help us estimate the novel view. For this task, we have two guides: the image from the observed view, and the entire shape collection. Thus we explore two kinds of relationships to accomplish feature synthesis.

The first type of relationship reflects the \emph{intra-shape structure} which builds the relationships between the novel view and the observed view. More specifically, such relationships characterize the correlation of features at different locations of different views. Such correlations naturally exist because images from different views observe the same underlying 3D shape, whose parts may be further correlated by 3D symmetries, repetitions, and other factors. We use a probabilistic framework to quantitatively measure such correlations, aiming to estimate the  ``surrogate suitability'' of one image patch in one view to predict another patch in another view. Such relationships can be discovered efficiently and accurately from the shape collection.

The second type of relationship reflects the \emph{inter-shape structure} which builds the relationships between the image object and the shape collection. Although the entire shape space for our multi-view shape representation is highly nonlinear, local neighborhoods can be well approximated by a linear low-dimensional subspace~\cite{tenenbaum2000global}. This allows us to synthesize novel points in the shape space through linear interpolation, so as to approximate the latent image object. The key point for capturing this relationship is to estimate appropriate coefficients for the interpolation, and we use an approach derived from locally linear embedding (LLE) methods~\cite{roweis2000nonlinear}.

To summarize, for each patch in the novel view, the intra-shape relationships allows us to find which patches in the observed view are its best surrogates, and the inter-shape relationships teach us how the feature of the new patch should be synthesized from those of its surrogates. In this way we can populate with features for all views of the latent object in our image, effectively creating its representation in our shape space.

Our major contributions in this paper are:
\begin{itemize}	
	\item We propose a method for synthesizing object image features from unobserved views by exploiting inter-shape and intra-shape relationships;
	\item Given the synthesized image features for novel views, we are able to compare two images of the same or different objects by comparing their synthesized multi-view shape features. The resulting distance is view-invariant and achieves much better performance on fine-grained image retrieval and classification tasks when compared with previous methods.
\end{itemize}





\section{Problem Formulation and Method Overview}
\begin{figure*}[th!]
	\centering
\includegraphics[width=\linewidth]{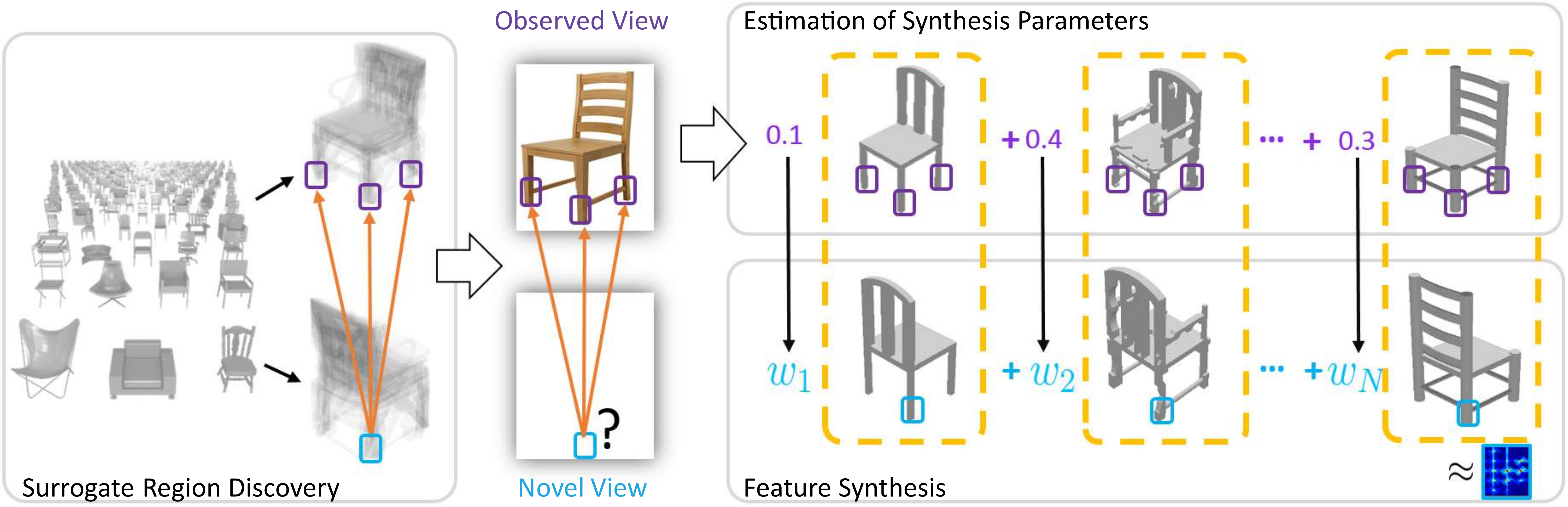}
\caption{{\bf Method overview.} {\small  Given an object image, we synthesize image features for novel views of the latent underlying object. The synthesis is done patch-by-patch. To predict the feature in the blue patch, we first look for regions in the observed view that are the most correlated to it, denoted as the surrogate regions (purple patches). The surrogate regions are found by analyzing the shape collection ({\bf Surrogate Region Discovery}, \S\ref{sec:surrogate_discovery}). Then, at the observed view, we learn how to reconstruct the surrogate region by a linear combination of the same region in the same view from all shapes in the shape collection ({\bf Estimation of Synthesis Parameter}, \S\ref{sec:parameter_estimation}). Finally, we transfer the linear combination coefficients back to the novel view to reconstruct the features in the blue patch by a linear combination of the features at the same patch on the novel view from all shapes ({\bf Feature Synthesis}, \S\ref{sec:feature_synthesis})}.\label{fig:overview}}
\end{figure*}
\label{sec:Formulation}
\mypara{Problem Input}

Our input contains two parts:

\noindent 1) an image of an object $O$ with bounding box and known class label. With recent advances in image detection and classification~\cite{DBLP:journals/corr/RussakovskyDSKSMHKKBBF14}, obtaining object label and bounding box has become much easier than before. All following steps are performed on a cropped image which only contains the object.

\noindent 2) a collection of 3D shapes (CAD models) from the same class. All 3D shapes are orientation aligned in the world coordinate system during a preprocessing step. Each shape is stored as a group of rendered images from the predefined list of viewpoints. Each rendered image is also cropped around the object. The view for object $O$ in the input image is estimated and approximated by one of the predefined viewpoints (\S\ref{sec:preprocessing}). To preserve detail information, the input image and the rendered images are resized to a fixed size and partitioned into overlapping square patches. Patch-based features such as HoG are extracted for each patch.

\mypara{Problem Output}

The output is the multi-view shape representation of the \emph{latent shape} of the input object, consisting of one image descriptor for each of the predefined views.

Without loss of generality, the key subproblem can be formulated as: given the object image in the input viewpoint $v_0$, estimate its features from another novel viewpoint $v_1$. The full multi-view representation can then be obtained by repeating this process for each predefined viewpoint.

\mypara{Method Overview}
\label{sec:Overview}

The proposed framework is shown in Fig.~\ref{fig:overview}.
For a specific patch in the novel view (the query patch), we seek to find the patches on the observed view which can best predict it (\emph{Surrogate Region Discovery} in Fig.~\ref{fig:overview}), and then learn how the features in those ``surrogate'' patches at the observed view can be best synthesized by the 3D model views (\emph{Estimation of Synthesis Parameters} in Fig.~\ref{fig:overview}). We finally apply the same synthesis method to the desired query patch (\emph{Feature Synthesis} in Fig.~\ref{fig:overview}). 

\section{Novel View Image Feature Synthesis}
\label{sec:approach}

\subsection{Notation}
\label{sec:notation}

Our notation follows standard mathematical conventions. The set of preselected viewpoints is indexed by $\set{V}=\{1, \ldots, V\}$. Each rendered image or the input real image is covered by $G$ overlapping patches, indexed by $\set G=\{1, \ldots, G\}$. A patch-based feature set $\vec{f}=[\vec{x}_1^T; \ldots; \vec{x}_G^T] \in \mathbb{R}^{G \times D}$ is extracted for the image, where each $\vec{x}_g \in \mathbb{R}^D$ is a feature vector for patch $g$. So the multi-view shape descriptor is represented by a tensor $\vec{S}=[\vec{f}_1; \ldots; \vec{f}_V]\in\mathbb{R}^{V \times G \times D}$, in which each $\vec{f}_v$ is a feature of a rendered image at view $v$. Finally, the 3D shape collection is denoted by $\set{S}=\{\mat{S}_1, \ldots, \mat{S}_N\}$, where $\mat{S}_n$ denotes the multi-view descriptor of a shape $n$. For convenience, we further let $\vec{S}_{n, v,g}\in\mathbb{R}^D$ denote the features of the $g$-th patch in the $v$-th view of the $n$-th shape.

\subsection{Surrogate Region Discovery}
\label{sec:surrogate_discovery}
%
%

\begin{figure}[h!]
	\includegraphics[width=\linewidth]{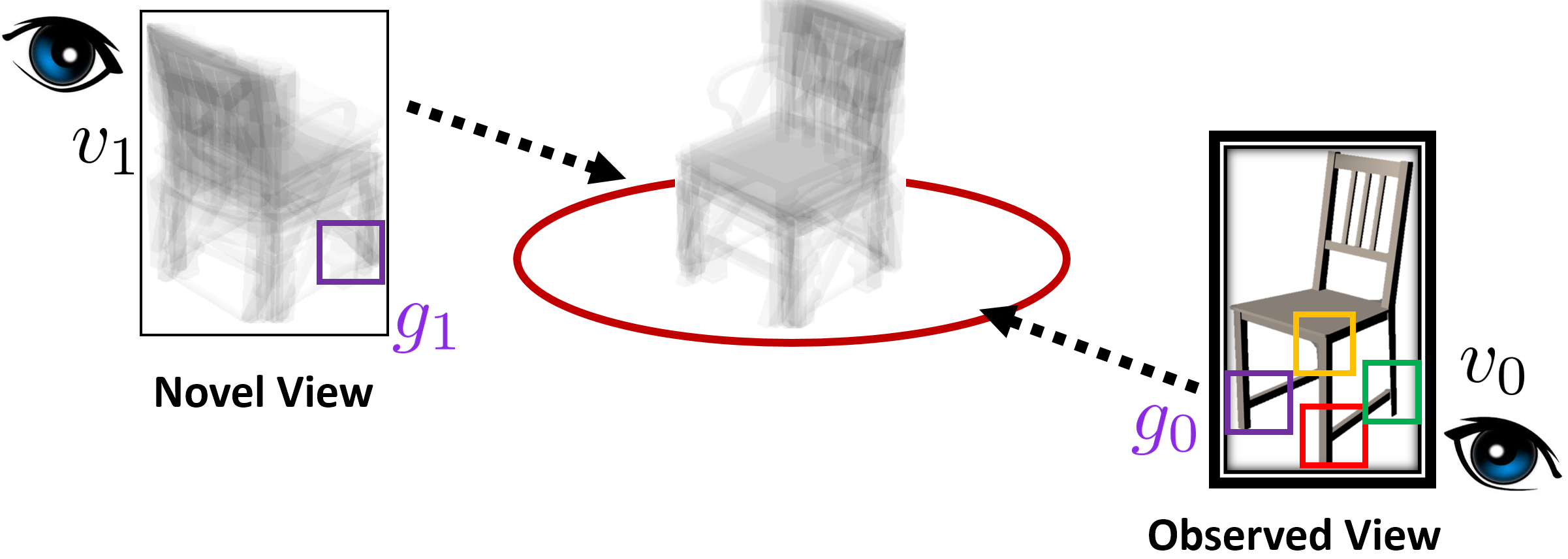}
	\caption{{\bf Patch surrogate relationship} (\S\ref{sec:surrogate_discovery})\label{fig:correlation}}
\end{figure}

Our ultimate goal is to transfer information across views, since we want to apply the synthesis parameter learned from one patch on one view to some other patches on other views. Therefore, we first exploit the cross-view patch appearance correlation as an important building block.

Fig.~\ref{fig:correlation} shows some intuitive examples about patch relationships. It is obvious that observing patch $g_0$ at view $v_0$ (purple box) helps us to determine the appearance of patch $g_1$ at view $v_1$ because they correspond to the same leg of a chair. Besides that, other factors such as symmetry and part membership in 3D shapes can also induce strong correlations among patch appearances. For example, the red patch in $v_0$ strongly correlates with $g_1$ because of chair symmetry; the green patch in $v_0$ is also correlated with $g_1$ because it belongs to the same part type as $g_1$ (chair leg). On the other hand, the appearance of chair back at $v_0$ will not be very helpful in determining $g_1$.

Therefore, there exists a group of patches at the observed view which are correlated with the query patch at the novel view, which we call {\em surrogate patches}; the region they form is called a surrogate region $\mathcal{R}$.

This relationship between patches across views can possibly be inferred by analyzing the shape geometry, but this is non-trivial and would require reliable object part segmentation, symmetry detection,etc. Therefore, we introduce an learning-based approach instead.

To precisely quantify such correlations between patches, we first introduce the concept of {\bf perfect patch surrogate}:
\begin{defn}
	Patch $g_0$ at view $v_0$ is a {\bf perfect patch surrogate} for patch $g_1$ at view $v_1$ if $\vec S_{i, v_0, g_0} = \vec S_{j, v_0, g_0}$ implies $\vec S_{i, v_1, g_1} = \vec S_{j, v_1, g_1}$ for {\bf any} shape pair $\vec S_i$ and $\vec S_j$.
	\label{defn:patch_surrogate}
\end{defn}
Intuitively, this means the similarity of patch $g_0$ at view $v_0$ implies the similarity of patch $g_1$ at view $v_1$ between a pair of 3D shapes.
Usually patches cannot be perfect surrogates for each other, so we seek for a probabilistic version of Definition~\ref{defn:patch_surrogate}:
\begin{defn}
	For a given patch $g_1$ at $v_1$, the {\bf surrogate suitability} of patch $g_0$ at view $v_0$ is defined as
    \begin{align*}
		\gamma(g_0; g_1) = \log P(\vec S_{i, v_1, g_1}=\vec S_{j, v_1, g_1}|\vec S_{i, v_0, g_0}=\vec S_{j, v_0, g_0}),
    \end{align*}
\end{defn}
$\gamma(g_0; g_1)$ is a measure of how suitable patch $g_0$ is as a surrogate for patch $g_1$. Intuitively, larger $\gamma(g_0; g_1)$ indicates a stronger correlation (Fig.~\ref{fig:surrogatability}). Therefore, the surrogate region $\mathcal{R}(g_1)$ can either consist of the top $k_p$ patches with highest $\gamma(g_0;g_1)$, or $\mathcal{R}(g_1) = \{g_0: \gamma(g_0;g_1)>\tau\}$, where $k_p$ or $\tau$ is determined empirically. We discuss the estimation of $\gamma(g_0, g_1)$ in the next section.

\begin{figure}[h!]
\includegraphics[width=\linewidth]{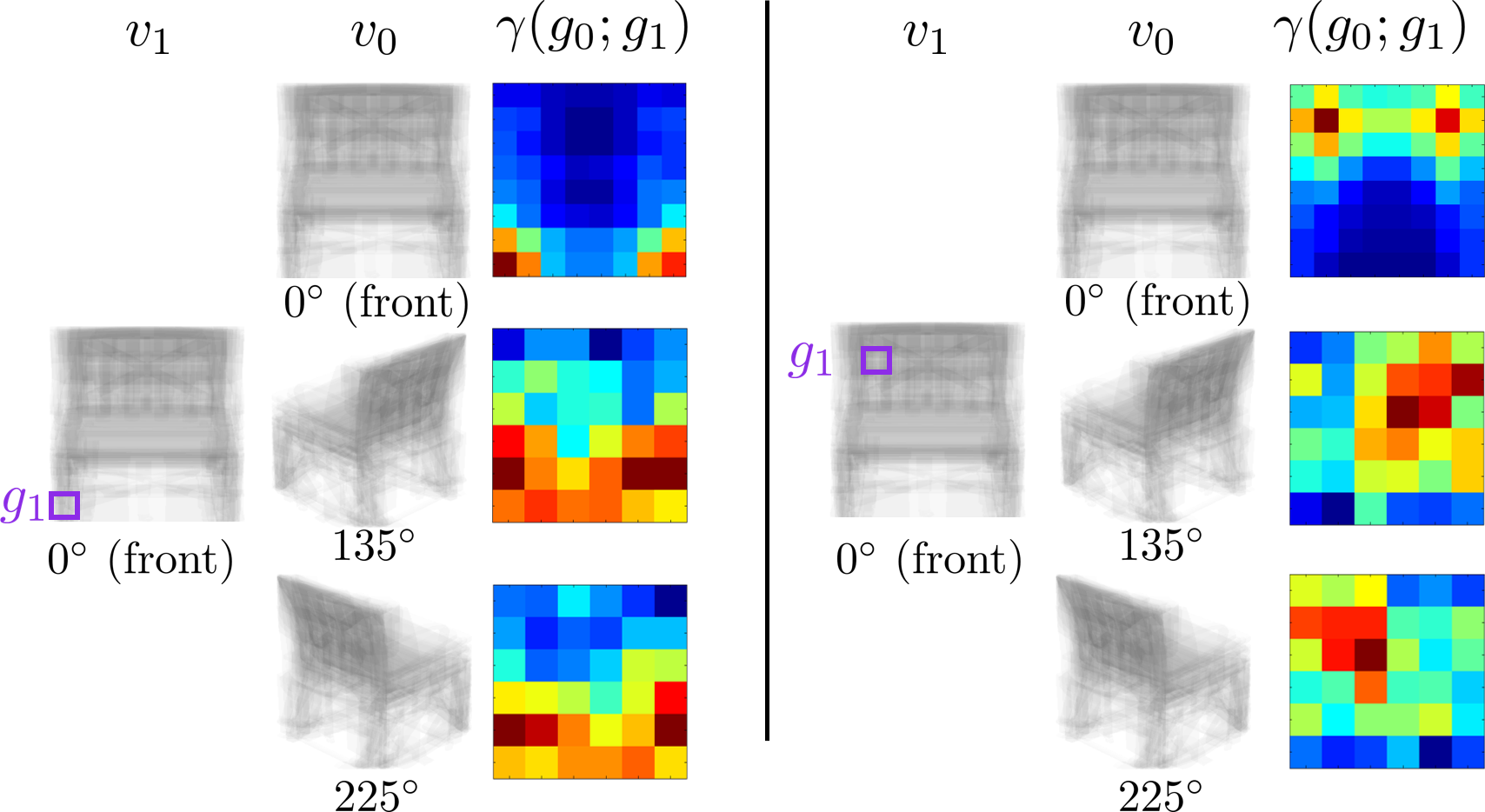}
\caption{{\bf Visualization of patch surrogate suitability.} {\small Two examples of the surrogate suitability from $g_1$ in $v_1$ to patches in view $v_0$. Red means large $\gamma$. For example, in the left figure, $g_1$ corresponds to the tip of right-front leg at $v_1$ (front view). At the front view itself, the left-front and right-front leg tips have higher surrogate suitability for $g_1$ because of symmetry; at the $225\degree$ view, the left-back, right-back and right-front leg tips have higher surrogate suitability because of symmetry and part membership.} \label{fig:surrogatability}}
\end{figure}

\subsubsection{Estimation of Patch Surrogate Suitability}
\label{ssec:estimation}

With the large-scale shape collection at hand, we adopt a learning based approach to estimate the (probabilistic) patch surrogate suitability in a data-driven manner.

Estimating $\gamma(g_0; g_1)$ is a non-parametric density estimation problem. As image features are high-dimensional continuous variables, theoretical results indicate that the sample complexity for reliable estimation is very high and infeasible in practice. To overcome the difficulty, we quantize features into a vocabulary $\mathcal{D}$ containing $D$ visual words. For notation convenience, we denote the codeword of $\vec S_{i, v_0, g_0}$ by $A^i_{g_0}$ and $\vec S_{i, v_1, g_1}$ by $A^i_{g_1}$, then
\begin{align}
\gamma(g_0; g_1) = \log P(A^i_{g_1}=A^j_{g_1}|A^i_{g_0}=A^j_{g_0})\label{eq:surrogatability}
\end{align}
where $P$ is the probability measure.

Estimating \eqref{eq:surrogatability} by an empirical conditional distribution still requires a large amount of samples. However, we show that \eqref{eq:surrogatability} can be cast as a R\'enyi entropy estimation problem.
We can prove that the \emph{optimal} sample complexity needed for estimating \eqref{eq:surrogatability} is $\Theta(D)$ (Theorem 1 in Appendix). Roughly speaking, with $N=\Theta(D)$ shapes, we can accurately estimate \eqref{eq:surrogatability} with high probability. The proof also suggests an algorithm to estimate Eq~\eqref{eq:surrogatability} as below:
\begin{align*}
&\hat{\gamma}(g_0; g_1) = \log \sum_{\substack{(A_{g_0}, A_{g_1})\\\in\mathcal{D}\times \mathcal{D}}}\hat{P}^2(A_{g_0}, A_{g_1}) -\log \sum_{A_{g_1}\in\mathcal{D}} \hat{P}^2(A_{g_1})\,.
\end{align*}
Here, probabilities $P^2(x)$ should be estimated by $\hat{P}^2(x)=\frac{N_x(N_x-1)}{N^2}$, where $N_x$ is the total number of times value $x$ appears in samples and $N=\sum_x N_x$.

\subsection{Estimation of Synthesis Parameters}
\label{sec:parameter_estimation}
%
%
%
%

As we have mentioned, the global shape space for our multi-view representation is non-linear and high-dimensional. Our assumption, however, is that shapes in a local neighborhood can be well approximated by a locally linear and low-dimensional subspace. Since the multi-view representation is actually a concatenation of features from all patches of all views, this local linearity does not only hold for the whole shape, but it also holds for each view of the shape, for each patch of the view, or even for a subset of patches of the view. In other words, features for the patches from the same location(s) on the same view of all shapes also lie in a locally linear subspace.

For any patch $g$ in view $v$, its feature is denoted as $\vec x_{v,g}\in \mathbb{R}^D$. We use $\mat{S}_{:, v, g}\in\mathbb{R}^{D\times N}$ to denote the feature matrix collecting patch $g$ of view $v$ of all 3D models, then local linearity tells us that
\begin{align}
\vec x_{v, g} \approx \mat{S}_{:, v, g} \vec w_{v, g}\,,
\end{align}
\
where $\vec w_{v, g} \in \mathbb{R}^N$ is the reconstruction coefficient.

Given a surrogate region $\mathcal{R}$ on the observed view, its features should be a linear combination of the same region across different 3D shapes. So $\vec w_{v_0, \mathcal{R}}$ can be estimated by solving an Locally Linear Embedding (LLE) problem:
\begin{equation}
\begin{aligned}
& \underset{\vec w_{v_0, \mathcal{R}}}{\text{minimize}} & & \sum_{g_0 \in \mathcal{R}} \|\vec x_{v_0, g_0}-\mat{S}_{\mathcal{N}, v_0, g_0} \vec w_{v_0, \mathcal{R}}\|^2 \,,\\
& \text{subject to} & & \vec w_{v_0,\mathcal{R}} \ge \vec 0;
\quad \vec w_{v_0, \mathcal{R}}^T \vec 1 = 1\,, \label{eq:LLE}\\
\end{aligned}
\end{equation}
where $\mathcal{N}$ denotes the $k$-nearest shapes obtained from the whole shape collection by comparing the rendered images on $v_0$ with the input image, thus $\mat{S}_{\mathcal{N}, v_0, g_0}\in\mathbb{R}^{D\times k}$ and ${\vec w}_{v_0, \mathcal{R}}\in\mathbb{R}^k$.

Note that our reconstruction coefficient $\vec{w}_{v_0, \mathcal{R}}$ is specific to the choice of view $v_0$ and patch(es) $\mathcal{R}$, unlike previous locally linear reconstruction methods assuming uniform $\vec w$ for the whole image descriptor~\cite{vidal2010tutorial}. Experiments show that spatial-varying coefficients allow us to recover features more precisely and with better spatial locality as compared with image-wide uniform coefficients (Fig.~\ref{fig:nPatch} and Fig.~\ref{fig:threshold}). 

\subsection{Feature Synthesis}
\label{sec:feature_synthesis}
%
%
%
Now that we have the synthesis coefficients estimated for $\mathcal{R}$ on view $v_0$, we have to decide how to transfer it back to $v_1$, so that we can synthesize $\vec{x}_{v_1, g_1}$ by  apply the coefficients on features of $g_1$ on $v_1$ from all shapes.

We make the following assumption to connect the weight across views: for a shape $\vec S$, if a patch $g_0$ can surrogate $g_1$ very well (with high $\gamma(g_0;g_1)$), and $\vec x_{v_0, g_0} \approx \mat{S}_{:, v_0, g_0} \vec w_{v_0, g_0}$ and $\vec x_{v_1, g_1} \approx \mat{S}_{:, v_1, g_1} \vec w_{v_1, g_1}$, then their weights are the same, i.e. $\vec w_{v_0, g_0} \equiv \vec w_{v_1, g_1}$.

\begin{figure}[h!]
\centering
	\includegraphics[width=0.8\linewidth]{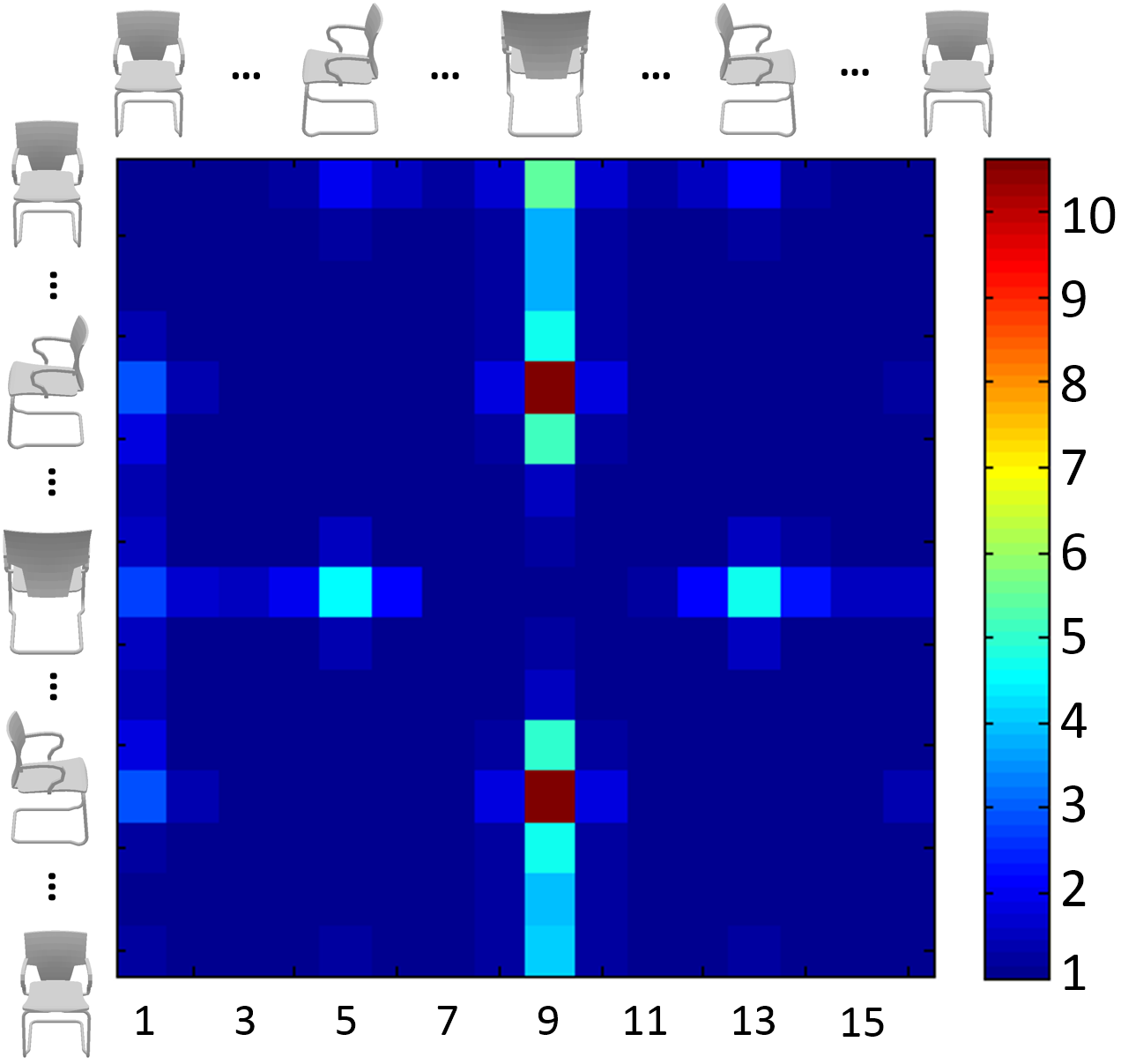}
	\vspace{-1mm}
	\caption{{\bf \small Evaluation of weight transferability.} \small The experiments are performed on rendered images of the shape collection (see \S\ref{sec:analysis} for details of experiment setup). The $(i,j)$-th element is obtained as below: we estimate the coefficients on view $v_i$ for all other shapes to reconstruct $S_{0,v_i}$, and apply them to $v_j$ to obtain the reconstructed feature $\hat{S}_{0,v_j}$; then we check the ranking of the distance between ground truth $S_{0,v_j}$ and reconstructed $\hat{S}_{0,v_j}$ in the list of distance values between $S_{0,v_j}$ and all other shapes on $v_j$. $(i,j)$-th element shows the ranking averaged through the whole collection. If the ranking is 1, it means the reconstructed feature is the closest to the ground truth feature, indicating that the reconstruction is good enough to describe $S_0$ on $v_j$. } \label{fig:transferrability}
\end{figure}

Empirical verification of this assumption is shown in Fig.~\ref{fig:transferrability}. The $(i,j)$-th element in the matrix shows the transferability from view $v_i$ to $v_j$. It measures how close the synthesized feature on view $v_j$ is to its ground truth version when using coefficients estimated on $v_i$. Each entry could range from 1 to the size of model collection (5,057 in this experiment). The closer the value is to 1, the better the transferability is between $v_i$ and $v_j$. The average value of the whole matrix is 1.39, quite close to 1, meaning that the weights transferred across views can reconstruct the features very well. Note that there are some entries indicating bad transferability between specific views. For example, view 5 and 9, which are the side view and back view respectively, cannot be transferred to each other very well because they share less common information. 

Therefore, $\vec w_{v_1, g_1}$ can be replaced by $\vec w_{v_0, \mathcal{R}}$ if $\mathcal{R}$ is the appropriate surrogate region on $v_0$ for $g_1$. We can reconstruct the feature by $\vec x_{v_1,g_1} = \mat{S}_{\mathcal{N}, v_1, g_1} \vec w_{v_0, \mathcal{R}}$. Fig.~\ref{fig:synthesis_visualization} shows two examples of our synthesized image features.

\begin{figure}[h!]
\centering
	\includegraphics[width=\linewidth]{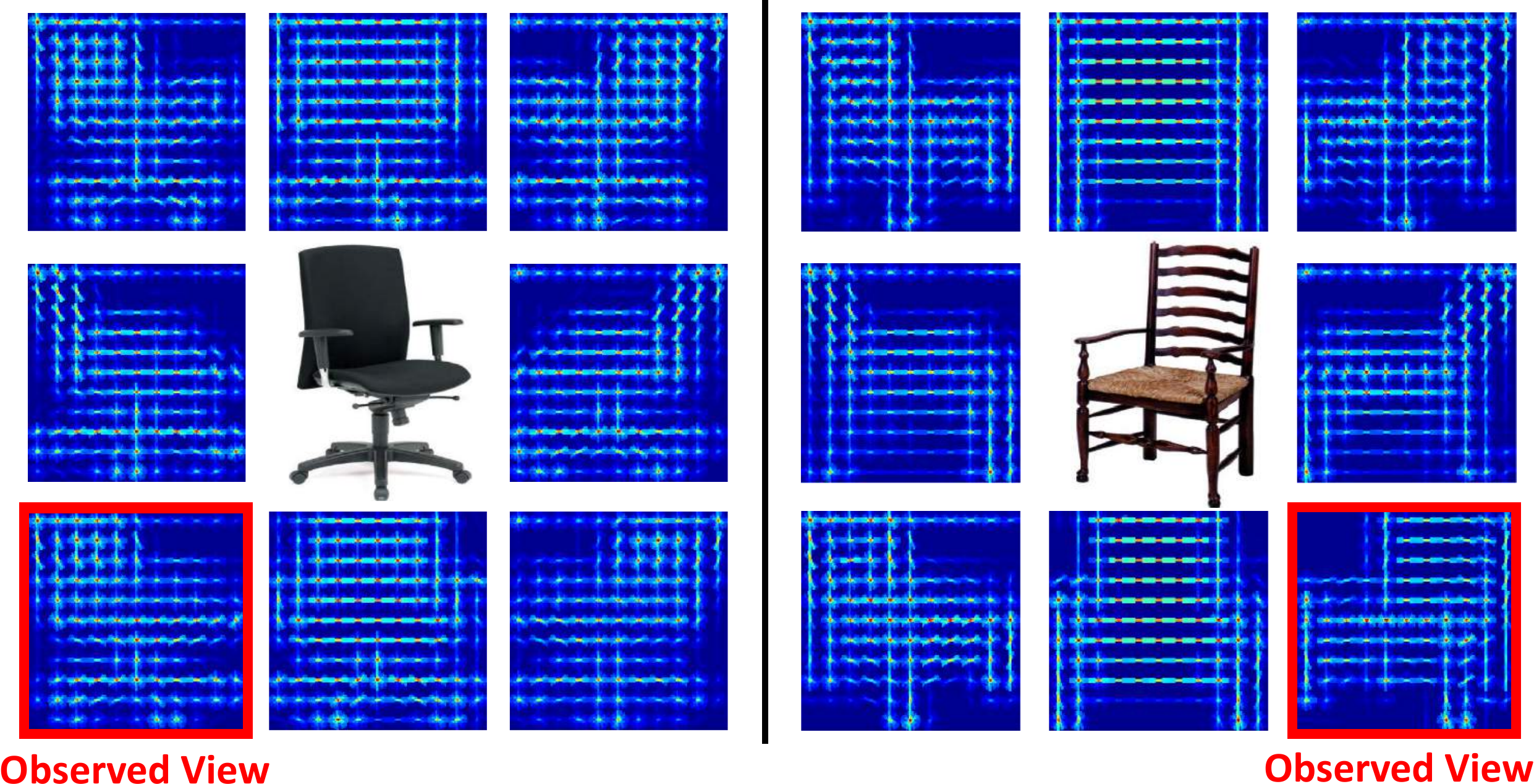}
\caption{{\bf Visualization of synthesized HoG features on 8 canonical views} On observed view, the original HoG feature is shown. \label{fig:synthesis_visualization}}
\end{figure}



\section{Experiments}

\subsection{Data Preparation}
\label{sec:preprocessing}
%

\mypara{Large-scale Shape Dataset}
We introduce a large-scale shape collection containing human-built 3D meshes from 100 man-made object categories. There are $\sim$100,000 3D models in total. The number of models per class varies from 20 (purse) to over 13000 (table). The models of our dataset are from the Trimble 3D Warehouse and the annotations are crowd-sourced from Amazon Mechanical Turk (AMT). Each object category in our shape dataset is further mapped to a synset in ImageNet~\cite{Deng2009}. Please refer to the supplementary material for more details on the dataset.

\mypara{Shape Collection Preprocessing}
To align the input 3D models, we employ the method described in~\cite{Huang:2013:FSL}, which jointly optimizes the orientation of all input 3D models to minimize the sum of distances between corresponding points computed using pair-wise alignment. To render 3D models, we sample a pre-specified number of different viewpoints over the viewing sphere centered at the shape.

The default setting for underlying features is as below unless specified otherwise: each rendered image is resized to $112\times 112$ and partitioned into $32\times 32$ patches which overlap with each other by 16 pixels, forming $6\times 6$ patches in total; HoG features are extracted for each patch.

\mypara{Image Preprocessing} As noted in \S\ref{sec:Formulation}, we assume object bounding boxes are provided. Camera pose is estimated for the cropped object. We train a random forest using the rendered images of our aligned 3D models.
Image features are extracted under default setting as before.

\subsection{View-Agnostic Distance}

Given the synthesized image features on a list of novel views, the distance between two images is obtained as the L2 distance between the two aligned and concatenated vectors of their synthesized multi-view features. Since the viewpoint information has been factored out, the resulting distance is view-invariant. This distance is denoted as view-agnostic distance (VAD) in the following experiments.

\subsection{Method Analysis}
\label{sec:analysis}

In this section, we analyze the performance of our method under different settings and provide a thorough understanding. The quantitative analysis is done in a fine-grained image retrieval application. We take the class ``Chair'' as an example. We collect all images with bounding boxes in ``Chair'' and its fine-grained sub-categories from ImageNet, and verify their labels by AMT. There are in total 5,813 images in 15 fine-grained categories, denoted as a ``cluttered'' set. In contrast, a subset of 1,309 images with simple background is selected to form a ``clean'' set to help us better understand the performance. The ``Chair'' shape collection contains 5,057 models, rendered in 16 views.

In our experiment, each image is taken as query and all other images are sorted according to their distance to the query image. Images belonging to the same fine-grained category are regarded as correct. If the query image belongs to multiple fine-grain categories, returned images belonging to any of its categories are regarded as correct. Precision-recall curves are generated to evaluate the retrieval performance. Our proposed VAD is compared with L2 distance of the baseline HoG descriptors.
\begin{figure}[h!]
	\centering
	\begin{subfigure}[b]{0.48\linewidth}
		\includegraphics[width=\textwidth,height=5.1cm]{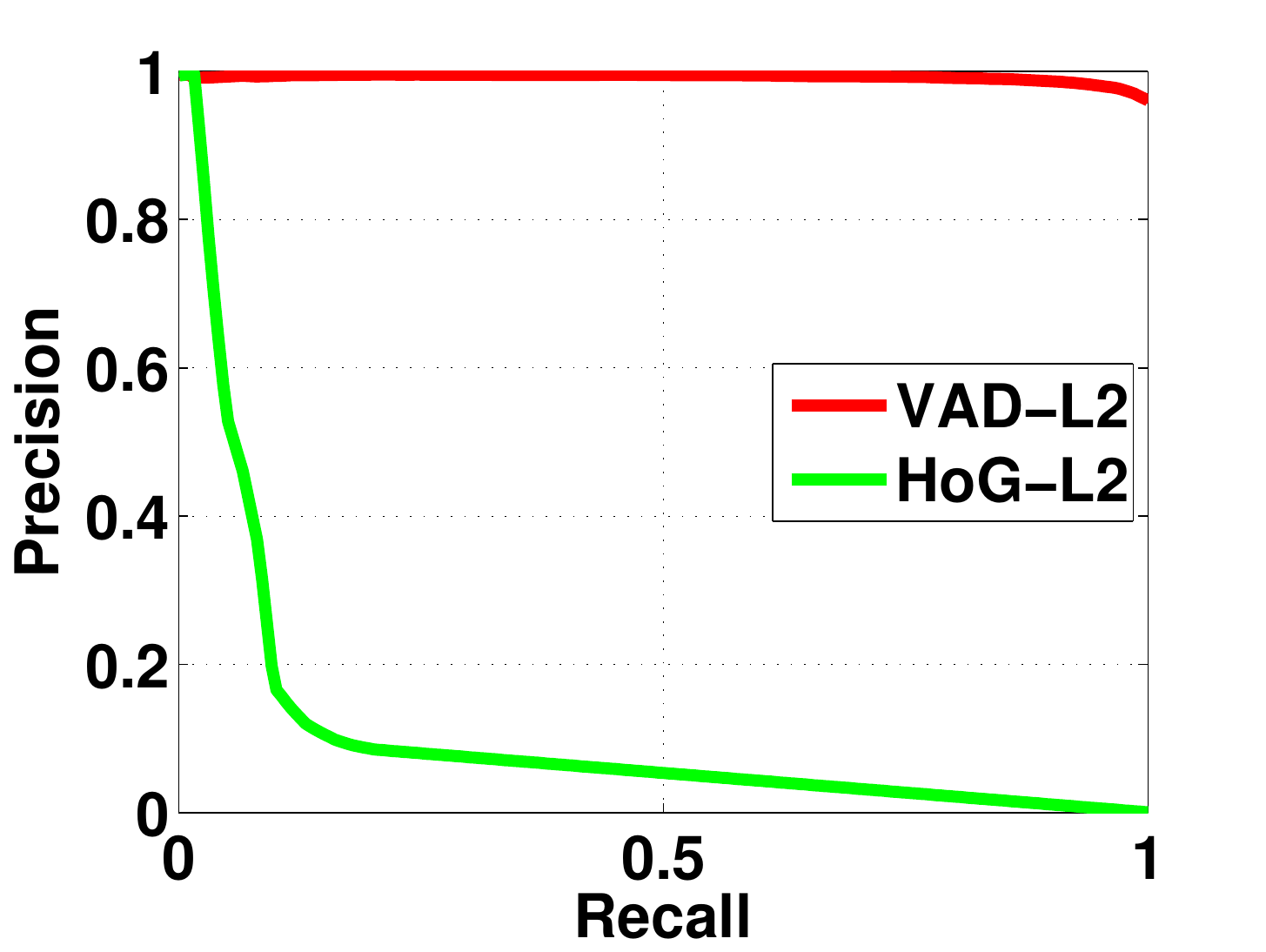}
		\caption{Fully controlled setting.}
		\label{fig:fullycontrol}
	\end{subfigure}%
	\begin{subfigure}[b]{0.48\linewidth}
		\includegraphics[width=\textwidth]{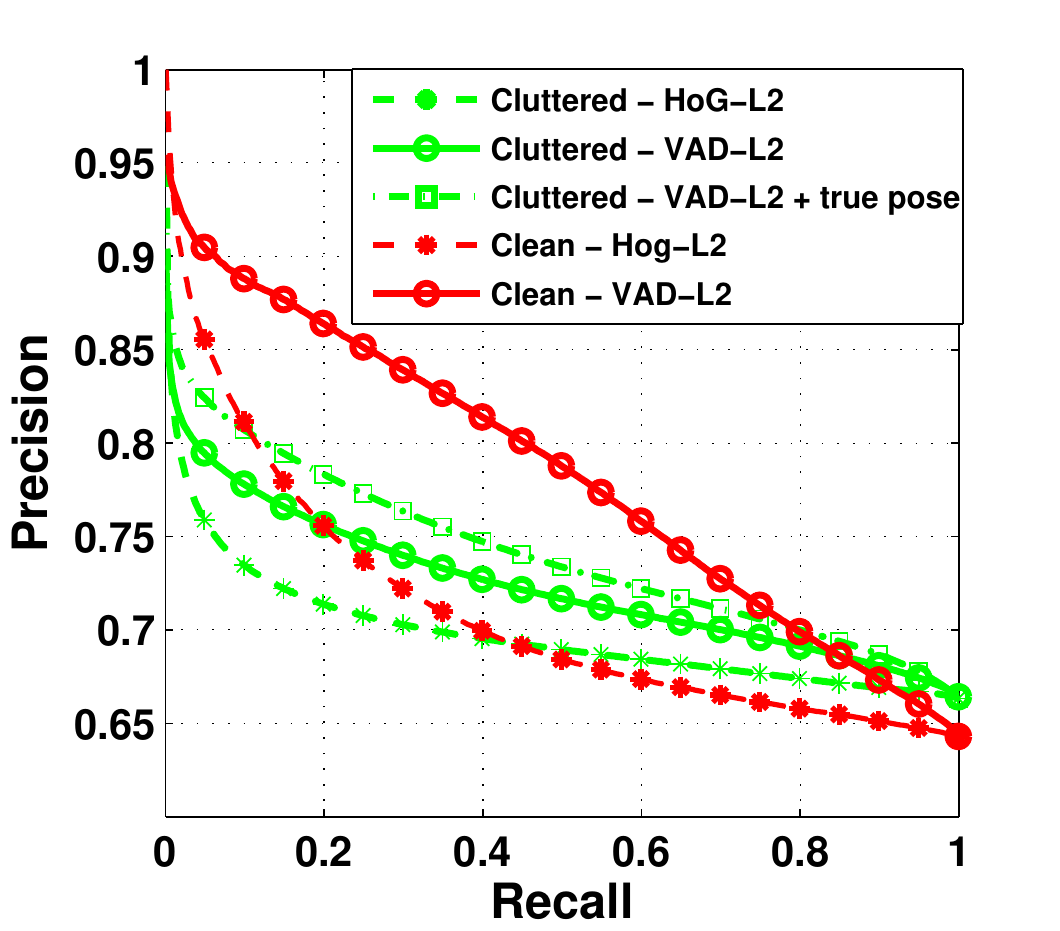}
		\caption{Clean vs. cluttered}
		\label{fig:clean_clutter}
	\end{subfigure}
	\caption{\bf Performance comparison.}\label{fig:comparison}
\end{figure}

\mypara{Fully Controlled Setting}

We first run an experiment using the data set from \cite{Aubry14}, which consists of 1393 3D chair models rendered in 62 viewpoints in an almost photo-realistic manner. We take all $1393\times62 = 86,366$ images and perform image retrieval with each image as query. Other view images from the same model are considered correct. This experiment is under fully controlled setting: the background is absolutely clean; pose estimation is 100\% accurate, and there is even no pose discretization error, i.e. the estimated pose and the true pose of the imaged object are exactly the same. This setting is the most favorable for our proposed method; therefore, we achieve almost perfect performance as shown in Fig.~\ref{fig:fullycontrol}. Since the chair models in \cite{Aubry14} and our models are from the same source, the chair models used here have been screened to avoid overlap with the 1,393 chairs, So it is guaranteed that there is no exact model for any of the query images.


\mypara{Clean background v.s. Cluttered background}

Fig.~\ref{fig:clean_clutter} shows the precision-recall curves of HoG-L2 distance and the proposed VAD. In both the ``clean'' (red lines) and ``clutter'' (green lines) cases, VAD has greatly boosted the performance of baseline HoG-L2. However, in ``clean'' case, pose estimation and nearest 3D model matching are more accurate, thus the performance boost is more significant compared with ``cluttered'' case. With better pose estimation algorithm, our VAD still has space to improve. As shown in the green dash-dot line in Fig.~\ref{fig:clean_clutter}, we can boost the performance even more in cluttered case if the ground truth viewpoint of the input image is given.


\begin{figure}[h!]
	\centering
	\begin{subfigure}[b]{0.48\linewidth}
		\includegraphics[width=\textwidth]{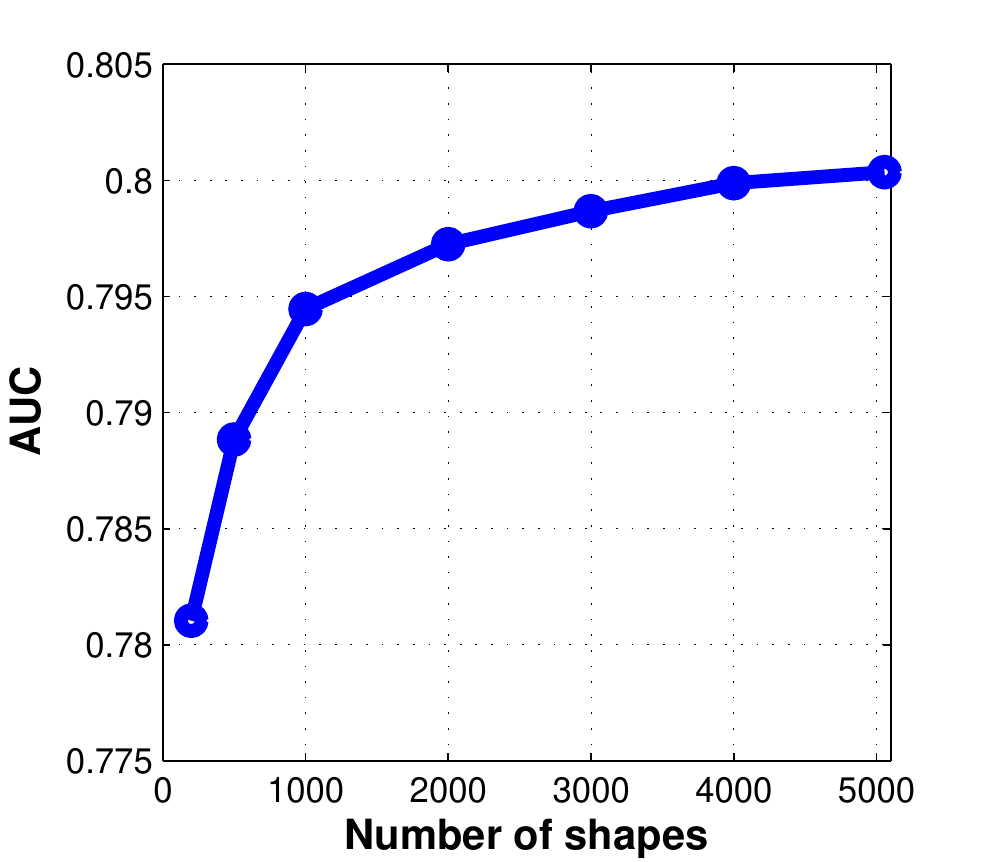}
		\caption{Size of shape collection.}
		\label{fig:nShapes}
	\end{subfigure}%
	\begin{subfigure}[b]{0.48\linewidth}
		\includegraphics[width=\textwidth]{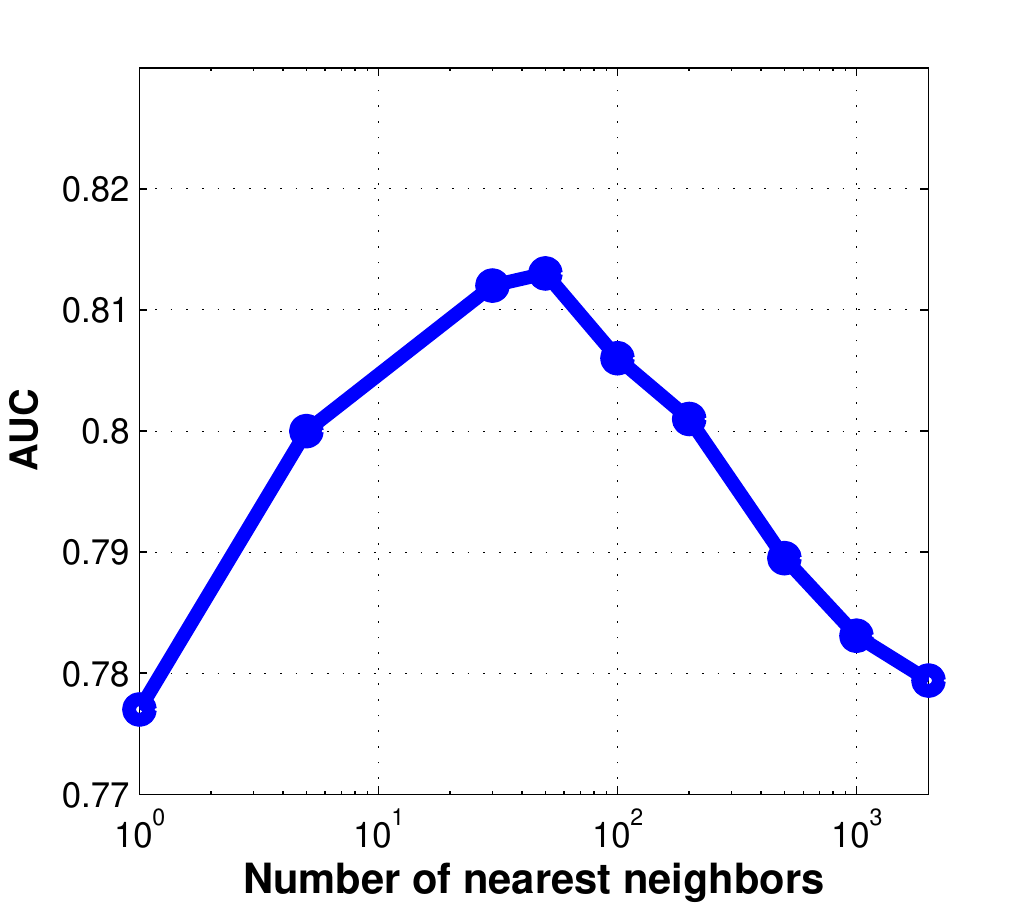}
		\caption{Neighborhood for LLE}
		\label{fig:nNeighbors}
	\end{subfigure}
	\begin{subfigure}[b]{0.48\linewidth}
		\includegraphics[width=\textwidth]{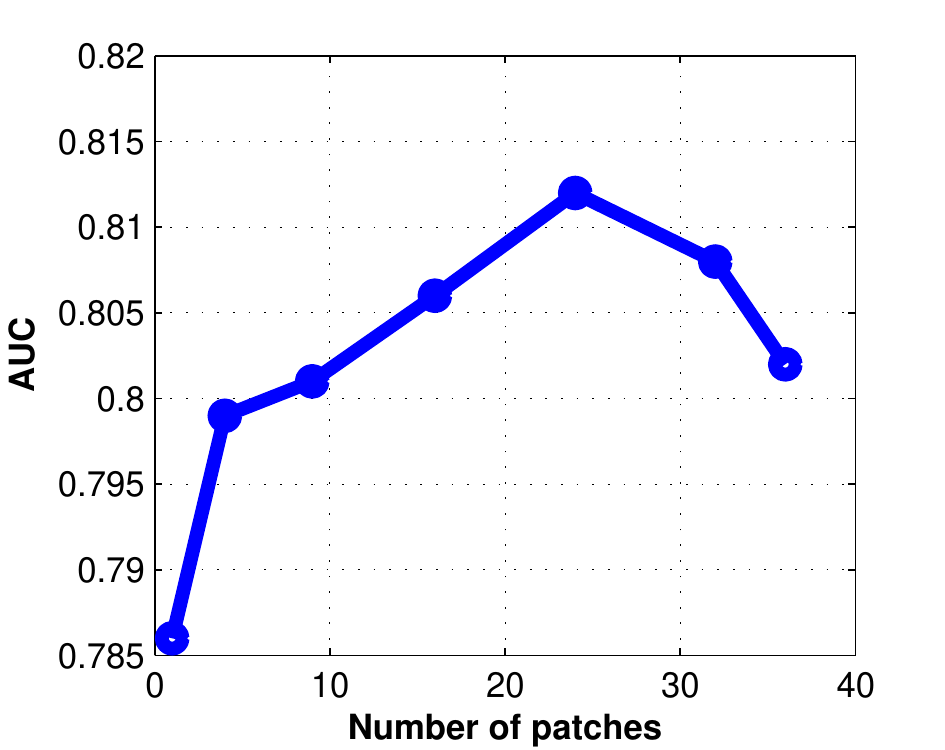}
		\caption{Locality by \texttt{\#} patches}
		\label{fig:nPatch}
	\end{subfigure}
	\begin{subfigure}[b]{0.48\linewidth}
		\includegraphics[width=\textwidth]{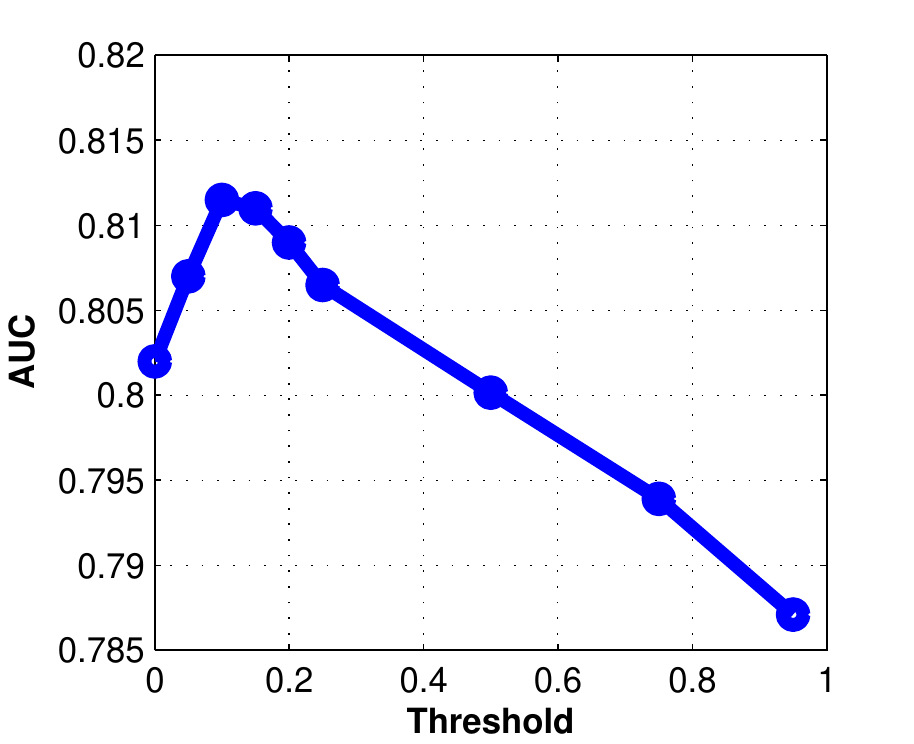}
		\caption{Locality by threshold}
		\label{fig:threshold}
	\end{subfigure}
	\caption{\bf Parameter sensitivity.}\label{fig:sensitivity}
\end{figure}
\mypara{Parameter Sensitivity Analysis}

Fig.~\ref{fig:nShapes} shows the performance changing with different number of 3D models. The performance for each experiment is summarized by the area under the precision-recall curve (AUC). The intuitive explanation is that, a larger shape collection is preferred since it can provide better coverage of the shape space and further help better reconstruct the descriptor on novel views. However, we also observe that the performance with 200 3D models is only 2\% lower than the performance with the full collection of 5,057 3D models. The reason is that our model has the ability to ``interpolate'' in the shape space, which compensates for the absence of large shape collection at query time. 
We use the whole shape collection below.

Fig.~\ref{fig:nNeighbors} shows the performance changing with the parameter $k$ for obtaining the local neighborhood in Eq~\eqref{eq:LLE}. Specifically, for $k=1$, it is equivalent to using the most similar shape to represent the query object. It is beneficial to use an appropriate range of neighborhood to find reconstruction coefficients of the query latent shape, which is shown in Fig.~\ref{fig:nNeighbors}. $k=200$ is used for the rest experiments.

When finding the surrogate patch region on the observed view, the scope of locality can be defined by the number of patches $k_p$, or the threshold of the correlation score $\tau$(\S\ref{sec:surrogate_discovery}). We show the performance changing with $k_p$ or $\tau$ in Fig.~\ref{fig:nPatch} and Fig.~\ref{fig:threshold}, respectively. $k_p =  36$ in Fig.~\ref{fig:nPatch} (the last data point) or $\tau =  0$ in Fig.~\ref{fig:threshold} (the first data point) corresponds to the case when the whole image is selected as surrogate region, i.e. LLE coefficients are uniform across the whole image. We can see that, an optimal region exists as a surrogate for a query patch. Including more surrogate patches can increase the samples for linear coefficients estimation, hence achieves better robustness; but including too many patches will eventually bring in unrelated patches, which are harmful.

We also investigate performance changing with $k_p$ in different partitioning of image patches (Table~\ref{table:partition}). Similar conclusion can be made that the surrogate region cannot be too large nor too small. $k_p=9$ for $6\times 6$ is used as default.

\begin{table}
	\centering
	\begin{tabular}{|c|c|c|c|c|c|}
		\hline      &  single & 25\%  & 50\%  & 75\%  & 100\% \\
		\hline  6x6 & 0.786   & 0.801 & 0.806 & \textbf{0.81}  & 0.808 \\
		\hline  8x8 & 0.774   & 0.79  & 0.795 & \textbf{0.797} & 0.788 \\
		\hline  10x10 & 0.76  & 0.784 & \textbf{0.787} & 0.785 & 0.774 \\
		\hline
	\end{tabular}
	\vspace{-2mm}
	\caption{{\bf AUC for different patch partition and different number of patches for surrogate region.}}
	\vspace{-5mm}
\label{table:partition}
\end{table}

\mypara{Applicability for CNN}

Our approach is not restricted to any specific kind of descriptors. Features extracted by convolutional neural networks (CaffeNet~\cite{jia2014caffe}) from different layers are tested here to replace the HoG features. The performance is shown in Table~\ref{table:otherfeature}.  It can be seen that for different choices of underlying features, our method can always boost the performance. 
\begin{table}[h]
	\centering
\begin{tabular}{|l|c|c|}
	\hline               & Vanilla L2 & VAD (ours) \\
	\hline  CNN (CaffeNet, pool 1) & 0.662 & \textbf{0.677} \\
	\hline  CNN (CaffeNet, pool 2) & 0.697 & \textbf{0.745} \\
	\hline  CNN (CaffeNet, pool 5) & 0.690 & \textbf{0.746} \\
	\hline  CNN (CaffeNet, fc 6) & 0.748 & \textbf{0.788} \\ 	
	\hline  CNN (CaffeNet, fc 7) & 0.744 & \textbf{0.785} \\ 	
	\hline
\end{tabular}
\caption{\bf Performance by other kinds of image features.}
\label{table:otherfeature}
\end{table}

\begin{figure*}[t!]
\centering
	\includegraphics[width=\linewidth]{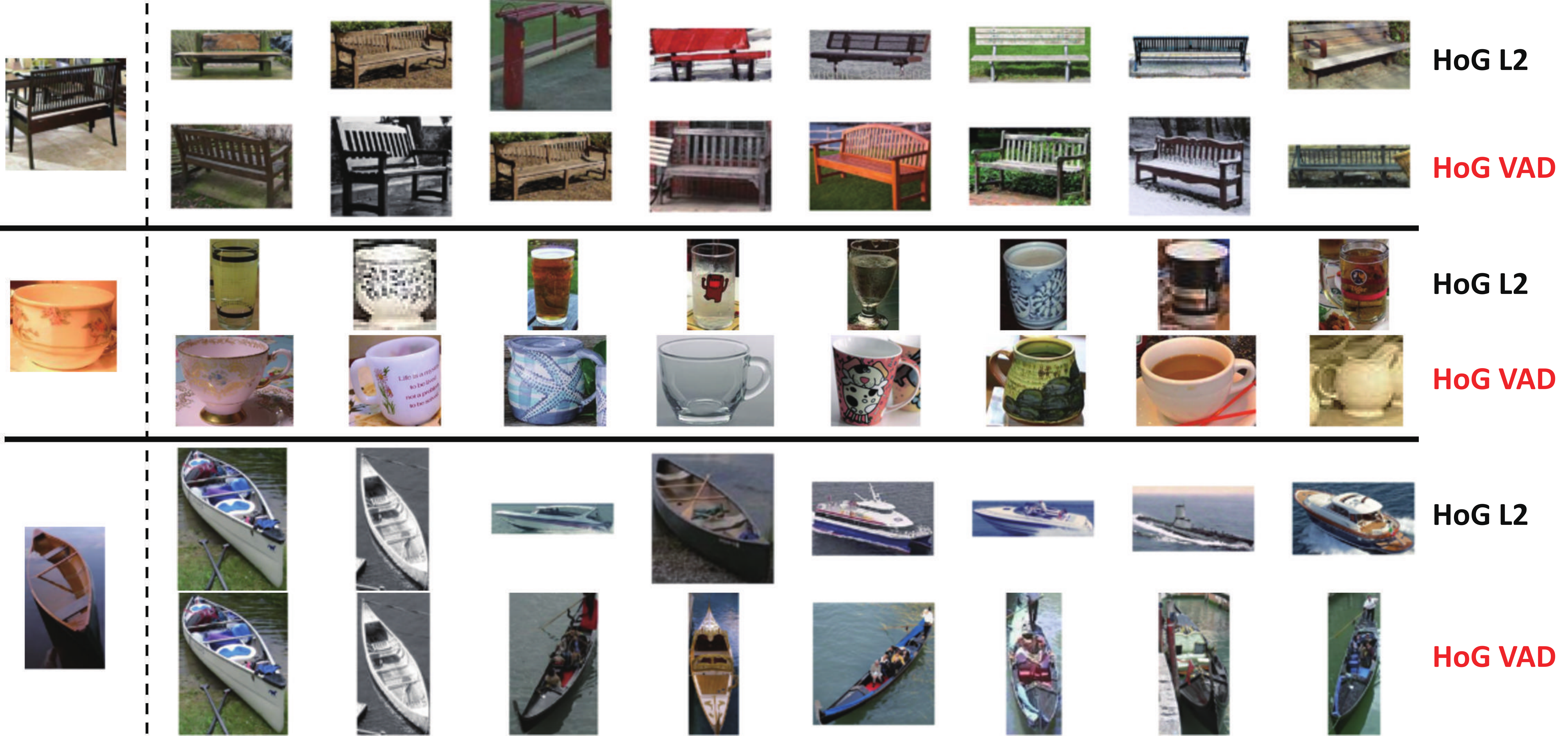}
	\caption{{\bf Fine-grained retrieval results for ``Bench'', ``Cup'' and ``Watercraft''.} \label{fig:retrieval_results}}
\end{figure*}

\subsection{Applications}

\mypara{Fine-grained Image Retrieval on 100 Classes}

We collect images of 100 classes with bounding boxes from ImageNet and verify their fine-grained labels using AMT. We have also preprocessed shape collections of the corresponding classes. Fine-grained image retrieval performance is evaluated as described in \S\ref{sec:analysis}. On average, the baseline L2 distance of HoG descriptor can achieve AUC of 0.635, and our approach can achieve an AUC of 0.694. Fig.~\ref{fig:retrieval_results} shows some examples of retrieval results for comparison. More results can be found in appendix.

\mypara{Part-based Image Retrieval}

Our approach can enable a new application of part-based image retrieval. The user can specify a region on the query image, and our approach can synthesize the features of related patches on novel views. The distance between images will only be evaluated on these patches instead of the whole images. Fig.~\ref{fig:partresults} shows examples of part-based image retrieval. The rectangles on query images are the input specified by users. Although the algorithm can only see the provided patch on the view of query image, it returns images with similar appearance in the corresponding regions from other viewpoints. This part-based search can be useful in product search by image, allowing users to express preferences for product parts.
\begin{table}[b!]
	\centering
	\begin{tabular}{|c|c|c|c|}
		\hline            &  \cite{DBLP:journals/corr/MajiRKBV13} (SPM)  & \cite{DBLP:journals/corr/MajiRKBV13} with b.box & Ours\\
		\hline  Accuracy  &  0.487 & 0.561 & 0.603 \\ \hline
	\end{tabular}
	\caption{{\bf Accuracy comparison on FGVC-aircraft.}  Note that our results is based on \cite{DBLP:journals/corr/MajiRKBV13} with bounding boxes.}
	\label{tb:classification}
\end{table}

\mypara{Fine-Grained Object Categorization}

Besides image retrieval, we also evaluate our proposed distance on fine-grained object categorization. We use the FGVC-aircraft dataset~\cite{DBLP:journals/corr/MajiRKBV13}, which contains 10,000 images with 100 different aircraft model variants.
We use the non-linear SVM on a $\chi^2$ kernel and replicate the SPM feature setting in \cite{DBLP:journals/corr/MajiRKBV13}, i.e, 600 k-means bag-of-visual words dictionary, multi-scale dense SIFT features, and $1\times 1$, $2\times 2$ spatial pyramid features. Using our approach, we obtain the view-invariant version of SPM feature. We also use bounding boxes predicted by R-CNN~\cite{girshick2014rcnn} and random forest for pose estimation (\S\ref{sec:preprocessing}) on test data. Table~\ref{tb:classification} shows that our method significantly outperforms the baseline. Note that the baseline method in \cite{DBLP:journals/corr/MajiRKBV13} does not use object bounding boxes in testing. To be fair, we also provide the baseline performance with bounding boxes provided.

\begin{figure}[h!]
	\centering
	\includegraphics[width=1\linewidth]{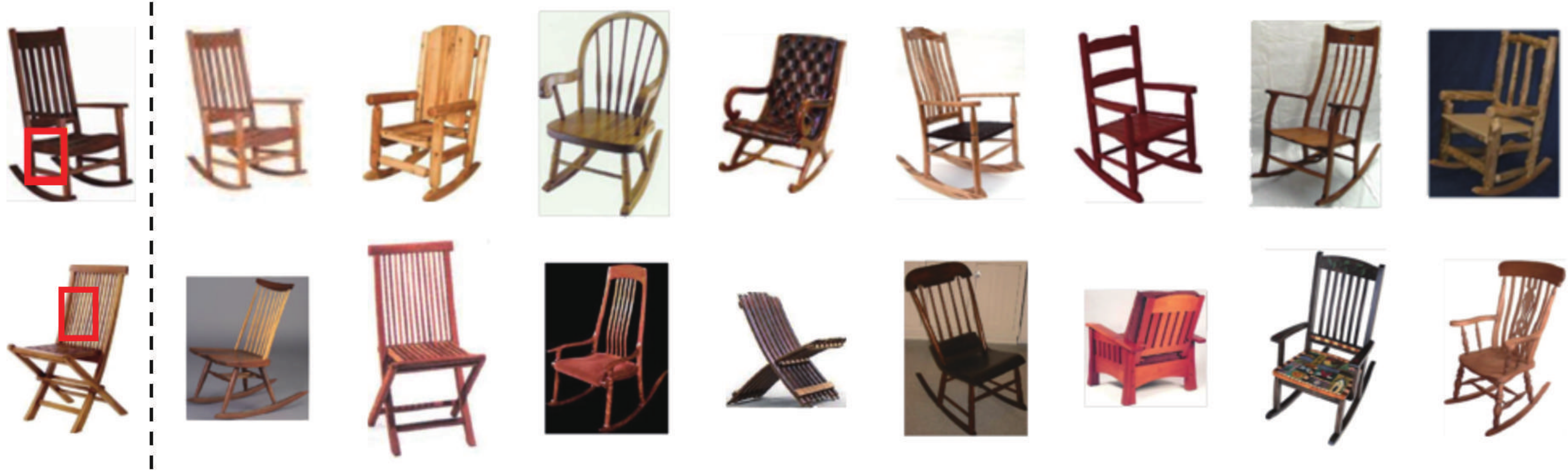}	
	\caption{{\bf Part-based retrieval results} \label{fig:partresults}}
\end{figure}

\section{Conclusion and Future Work}
In this paper, we have proposed a framework for synthesizing features of a new view of an object in an image, given a collection of 3D models from the same object class. By collecting together the features from several canonical views of the object, we arrive at a view-independent model of the object. With this representation, we can achieve view-invariant image comparison, factoring out the influence of viewpoint and only focusing on the intrinsic object properties. The proposed feature synthesis framework has been analyzed theoretically and empirically, and the resulting view-agnostic distance has been evaluated on various computer vision tasks, including fine-grained image retrieval and classification.

\label{sec:Limit}
\mypara{Future Work}

Our current framework does not take object occlusion and background clutter into consideration.
We leave the task as a future work.
In addition, the current surrogate region discovery method is at the categorical level, ignoring details of individual shapes. Geometric properties such as symmetry and part decomposition may help. Then one can use visible parts of an object to predict invisible parts.


\bibliographystyle{ieee}
\bibliography{ComputerVision,imageShapeNet,NIPS2012}

\section{Appendix}


{\bf Please also refer to the .mp4 video at \url{http://ai.stanford.edu/~haosu/FeatureSynth/video_for_arxiv.mp4} for more visualization and explanation.}

\subsection{Intuition to Weight Transferrability (\S 3.4)}
\begin{hypo}
	If a patch $g_0$ can perfectly surrogate $g_1$, and $\vec x_{v_0, g_0}$ and $\vec x_{v_1, g_1}$ are from some shape $ S$ such that $\vec x_{v_0, g_0}= \mat{S}_{:, v_0, g_0} \vec w_{v_0, g_0}$ and $\vec x_{v_1, g_1} = \mat{S}_{:, v_1, g_1} \vec w_{v_1, g_1}$, then $\vec w_{v_0, g_0} \equiv \vec w_{v_1, g_1}$.
\end{hypo}
	
To provide more intuition, we check the simple cases when $\vec x_{v_0, g_0} = \vec S_{n, v_0, g_0}$ for $1\le n \le N$, i.e., patch $g_0$ at view $v_0$ of shape $\vec S$ looks exactly like the corresponding patch of some $\vec S_n$. $\vec x_{v_0, g_0} = \vec S_{n, v_0, g_0}$ implies that $\vec w_{v_0, g_0} = [0, \cdots, 1, \cdots, 0]^T$, i.e., a zero vector except that the $n$-th element equals  to 1\footnote{More rigorously, the result is true when the null space of $S_{:, v_0, g_0}$ is $\{\vec 0\}$. In practice, the condition is usually satisfied when discriminative features such as HoG and DeepLearning features are used.}. On the other hand, because $g_0$ is a perfect patch surrogate of $g_1$, following the definition, $\vec x_{v_1, g_1} = \vec S_{n, v_1, g_1}$. This implies $\vec w_{v_1, g_1} = [0, \cdots, 1, \cdots, 0]^T$. Therefore, $\vec w_{v_1, g_1} = \vec w_{v_0, g_0}$.

\subsection{Derivation of Patch Surrogate Suitability (\S 3.2.1)}
We first repeat the definition of surrogate suitability:
\begin{align*}
\gamma(g_0; g_1) =& \log P(A_{g_1}^i=A_{g_1}^j|A_{g_0}^i=A_{g_0}^j)\\
=&\log P(A_{g_0}^i=A_{g_0}^j, A_{g_1}^i=A_{g_1}^j)-\log P(A_{g_0}^i=A_{g_0}^j)\\
\end{align*}
Please refer to \S 3.2.1 of main paper for notation definitions.

Next, we introduce the algorithm and sample complexity analysis for estimating $\gamma(g_0; g_1)$.

\begin{lem}
$\gamma(g_0; g_1) = \log \sum_{(A_{g_0}, A_{g_1})\in\mathcal{D}\times\mathcal{D}} P^2(A_{g_0}, A_{g_1})-\log \sum_{A_{g_0}\in\mathcal{D}} P^2(A_{g_0})$
\label{lemma:key}
\end{lem}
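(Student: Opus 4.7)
The plan is to recognize that the two probabilities appearing in $\gamma(g_0;g_1)$ are collision probabilities of i.i.d.\ samples, and then expand each by conditioning on the common value. Because the 3D shapes $\vec S_i$ and $\vec S_j$ are drawn independently from the same shape distribution, the pairs $(A_{g_0}^i, A_{g_1}^i)$ and $(A_{g_0}^j, A_{g_1}^j)$ are i.i.d.\ draws from the joint codeword distribution $P(A_{g_0},A_{g_1})$ on $\mathcal{D}\times\mathcal{D}$.

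First I would rewrite $\gamma(g_0;g_1)$ via Bayes' rule as
\begin{equation*}
\gamma(g_0;g_1)=\log P\bigl(A_{g_0}^i=A_{g_0}^j,\; A_{g_1}^i=A_{g_1}^j\bigr)-\log P\bigl(A_{g_0}^i=A_{g_0}^j\bigr),
\end{equation*}
which is already displayed in the preamble to the lemma. The task therefore reduces to computing the two unconditional collision probabilities.

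Next I would evaluate the joint collision probability by partitioning over the common value $(a,b)\in\mathcal{D}\times\mathcal{D}$ that both pairs must share, and use independence of $i$ and $j$:
\begin{equation*}
P\bigl(A_{g_0}^i=A_{g_0}^j,A_{g_1}^i=A_{g_1}^j\bigr)=\sum_{(a,b)}P(A_{g_0}^i=a,A_{g_1}^i=b)\,P(A_{g_0}^j=a,A_{g_1}^j=b)=\sum_{(a,b)}P^2(a,b).
\end{equation*}
The same calculation applied only to $g_0$ gives $P(A_{g_0}^i=A_{g_0}^j)=\sum_a P^2(a)$. Substituting both into the rewritten expression for $\gamma(g_0;g_1)$ yields exactly the claimed formula.

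There is no real obstacle: the entire argument is the standard identity that the collision probability of i.i.d.\ samples equals $\sum_x P^2(x)$ (the exponential of the negative R\'enyi-2 entropy). The only thing worth being careful about is making the i.i.d.\ assumption on $\vec S_i,\vec S_j$ explicit, since it is what licenses factoring the joint probability as a product of identical marginals before summing. Once that is stated, the two lines above complete the proof.
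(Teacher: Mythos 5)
Your proposal is correct and follows essentially the same route as the paper: rewrite the conditional probability as a difference of log collision probabilities, then expand each collision probability as $\sum_x P^2(x)$ by partitioning over the common value and using the independence of shapes $i$ and $j$. Your explicit remark that the i.i.d.\ assumption on $\vec S_i, \vec S_j$ is what licenses the factorization is a welcome clarification that the paper leaves implicit.
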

\begin{proof}
Computation of the first term:
\begin{align*}
&\log P(A_{g_0}^i=A_{g_0}^j, A_{g_1}^i=A_{g_1}^j) \\
=& \log \sum_{\substack{x\in\mathcal{D} \\y\in\mathcal{D}}} P(A_{g_0}^i=x, A_{g_0}^j=x, A_{g_1}^i=y, A_{g_1}^j=y)\\
=& \log \sum_{(x, y)\in\mathcal{D}\times\mathcal{D}} P((A_{g_0}^i, A_{g_1}^i)=(x, y), (A_{g_0}^j, A_{g_1}^j)=(x, y))\\
=& \log \sum_{(x, y)\in\mathcal{D}\times\mathcal{D}} P((A_{g_0}^i, A_{g_1}^i)=(x, y)) P((A_{g_0}^j, A_{g_1}^j)=(x, y))\\
=& \log \sum_{(x, y)\in\mathcal{D}\times\mathcal{D}} P^2((A_{g_0}, A_{g_1})=(x, y))
\end{align*}
where the last line follows from the independence of Shape $i$ and Shape $j$.

Computation of the second term :
\begin{align*}
\log P(A_{g_0}^i=A_{g_0}^j) &= \log \sum_{x\in\mathcal{D}} P(A_{g_0}^i=x, A_{g_0}^j=x)\\
&= \log \sum_{x\in\mathcal{D}} P(A_{g_0}^i=x)P(A_{g_0}^j=x)\\
&= \log \sum_{x\in\mathcal{D}} P^2(A_{g_0}=x)
\end{align*}
where the last line follows from the independence of Shape $i$ and Shape $j$.
\end{proof}

Note that, $\log \sum_x P^2(x)$ in Lemma~\ref{lemma:key} is a classical quantity in Information Theory, named R\'enyi-entropy. Recent work \cite{DBLP:journals/corr/AcharyaOST14} provides a tight bound for the sample complexity of estimating R\'enyi-entropy. In the following, we restate their results.

Let $f$ be an \emph{estimator} of R\'enyi-entropy $\log \sum_x P^2(X)$ for distributions over support set $\mathcal{X}$, then $f:\mathcal{X}^*\rightarrow\mathbb{R}$ maps a sequence of samples drawn from a distribution $p$ to its entropy. Given independent samples $X^n=X_1, \ldots, X_n$ from $p$, define
$$S^f(k, \delta, \epsilon) \overset{\mbox{def}}=\sup_p\{\min\{n:P(|H(p)-f(X^n)|>\delta)<\epsilon\}\}$$
where $k=|\mathcal{X}|$ is the cardinality of $\mathcal{X}$.

Then, the \emph{sample complexity} of estimating $H(p)$ is defined as
$$S(k, \delta, \epsilon)\overset{\mbox{def}}=\inf_f S^f(k, \delta, \epsilon)$$

\cite{DBLP:journals/corr/AcharyaOST14} shows that $S(k, \delta, \epsilon)=\Theta(\sqrt{k})$. The bound is achievable using the unbiased estimator.

Using the above result, we can prove the following result:
\begin{thm}
	The optimal sample complexity to estimate $\gamma(g_0; g_1)$ is $\Theta(D)$, where $D$ is the cardinality of symbol set $\mathcal{D}$ as defined in \S 3.2.1, i.e., the number of visual words in dictionary.
\end{thm}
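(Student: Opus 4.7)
The plan is to reduce the estimation of $\gamma(g_0;g_1)$ to two independent Rényi-2 entropy estimation problems and then invoke the $\Theta(\sqrt{k})$ bound of \cite{DBLP:journals/corr/AcharyaOST14}. By Lemma~\ref{lemma:key}, we can write $\gamma(g_0;g_1) = H_2(P_{g_0,g_1}) - H_2(P_{g_0})$, where $H_2(\cdot) \overset{\mbox{def}}{=} \log \sum_x P^2(x)$ is the Rényi-2 entropy, $P_{g_0,g_1}$ is the joint distribution of codewords over the product alphabet $\mathcal{D}\times\mathcal{D}$ (cardinality $D^2$), and $P_{g_0}$ is the marginal distribution over $\mathcal{D}$ (cardinality $D$). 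Both quantities can be estimated from the same $N$-sample shape collection since the marginal is a projection of the joint.

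For the upper bound, I would apply the estimator of \cite{DBLP:journals/corr/AcharyaOST14} separately to the joint and the marginal. Over the product alphabet of size $D^2$, achieving error $\delta/2$ with failure probability $\epsilon/2$ requires $O(\sqrt{D^2}) = O(D)$ samples; over the marginal alphabet of size $D$ it requires $O(\sqrt{D})$ samples, which is dominated by $O(D)$. A union bound over the two estimators and the triangle inequality on the difference yield error at most $\delta$ with probability at least $1-\epsilon$, so $S(D,\delta,\epsilon) = O(D)$ samples suffice.

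For the matching lower bound, I would argue that estimating $\gamma$ is at least as hard as estimating the joint Rényi-2 entropy $H_2(P_{g_0,g_1})$. Concretely, one can consider a family of distributions in which the marginal $P_{g_0}$ is fixed (so $H_2(P_{g_0})$ is known and carries no information), reducing any $\gamma$-estimator to a Rényi-2 estimator over the alphabet of size $D^2$. The lower bound of \cite{DBLP:journals/corr/AcharyaOST14} then forces $\Omega(\sqrt{D^2}) = \Omega(D)$ samples. Combining the two bounds gives $S(D,\delta,\epsilon) = \Theta(D)$.

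The only real subtlety is that the two Rényi estimators are fed correlated samples (the marginal counts are derived from the joint counts), so a clean black-box union bound needs to be justified; alternatively, one can plug the unbiased plug-in estimator $\hat P^2(x) = N_x(N_x-1)/(N(N-1))$ for each term directly and verify concentration by a second-moment calculation, which mirrors the analysis in \cite{DBLP:journals/corr/AcharyaOST14}. I expect this bookkeeping around the shared samples, rather than the alphabet-size scaling itself, to be the main technical obstacle.
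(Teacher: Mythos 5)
Your proposal takes essentially the same route as the paper: both invoke Lemma~\ref{lemma:key} to split $\gamma$ into a joint R\'enyi-2 entropy over $\mathcal{D}\times\mathcal{D}$ (alphabet $D^2$, hence $\Theta(\sqrt{D^2})=\Theta(D)$ samples via the cited bound) and a marginal one over $\mathcal{D}$ ($\Theta(\sqrt{D})$, dominated). The paper's own proof is in fact terser than yours --- it simply asserts the two term-wise complexities --- so your added care about the lower-bound reduction and the correlated-sample union bound only fills in details the paper leaves implicit.
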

\begin{proof}
It is easy to see that the sample complexity to estimate the first term of $\gamma$ in Lemma~\ref{lemma:key} is $\Theta(D)$ and the second term is $\Theta(\sqrt{D})$. Therefore, the overall sample complexity for estimating $\gamma$ is $\Theta(D)$.
\end{proof}

\subsection{More Results on Image Retrieval}
Next two pages show groups of representative image retrieval results using our view-agnostic distance and the baseline L2 HoG feature distance. For each group, the first image is the query image.
\begin{figure}
	\centering
	\includegraphics[width=1\linewidth]{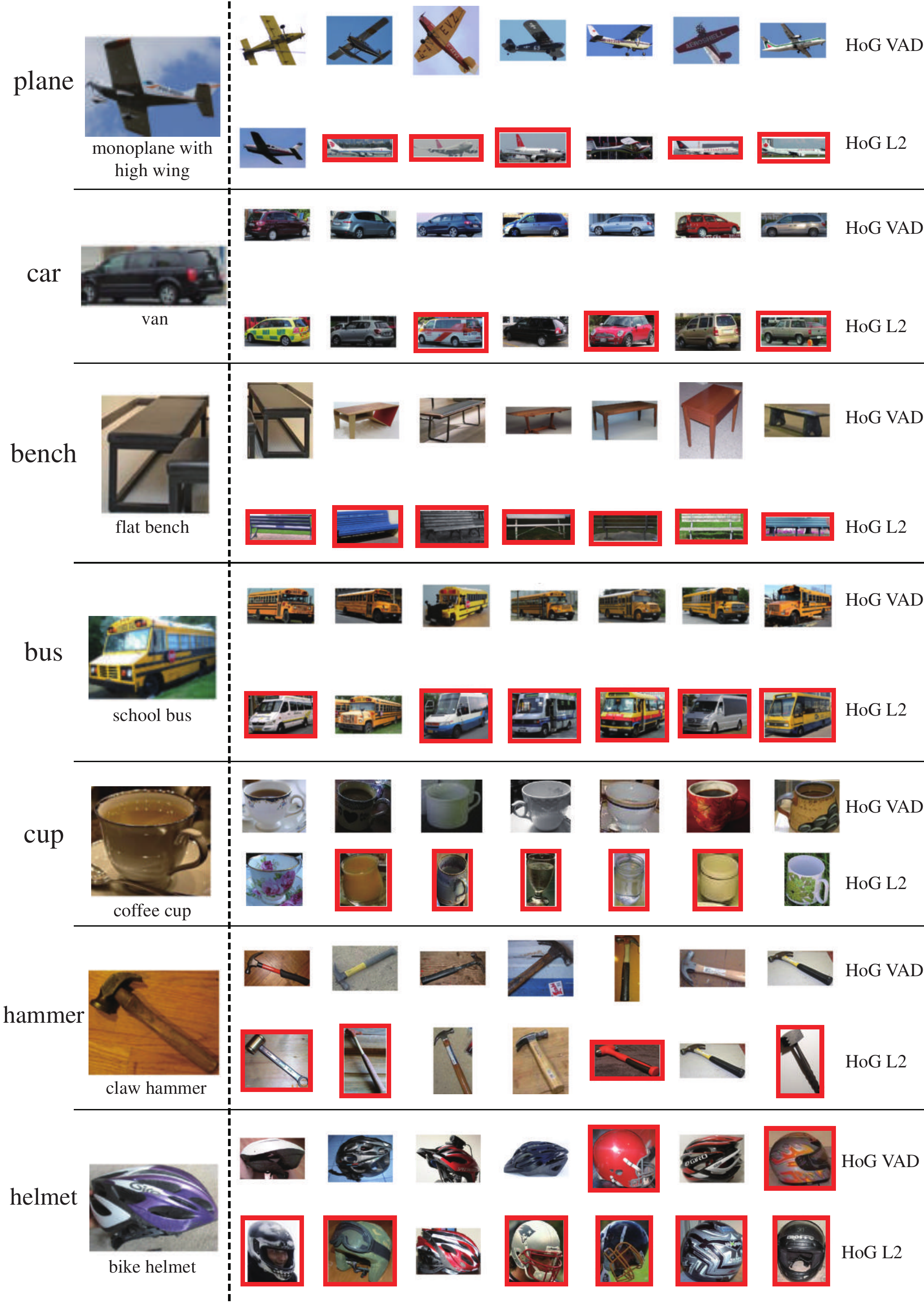}
	\caption{Image retrieval results on 100 classes (to be continued...)}
\end{figure}
\begin{figure}
	\centering	
	\includegraphics[width=1\linewidth]{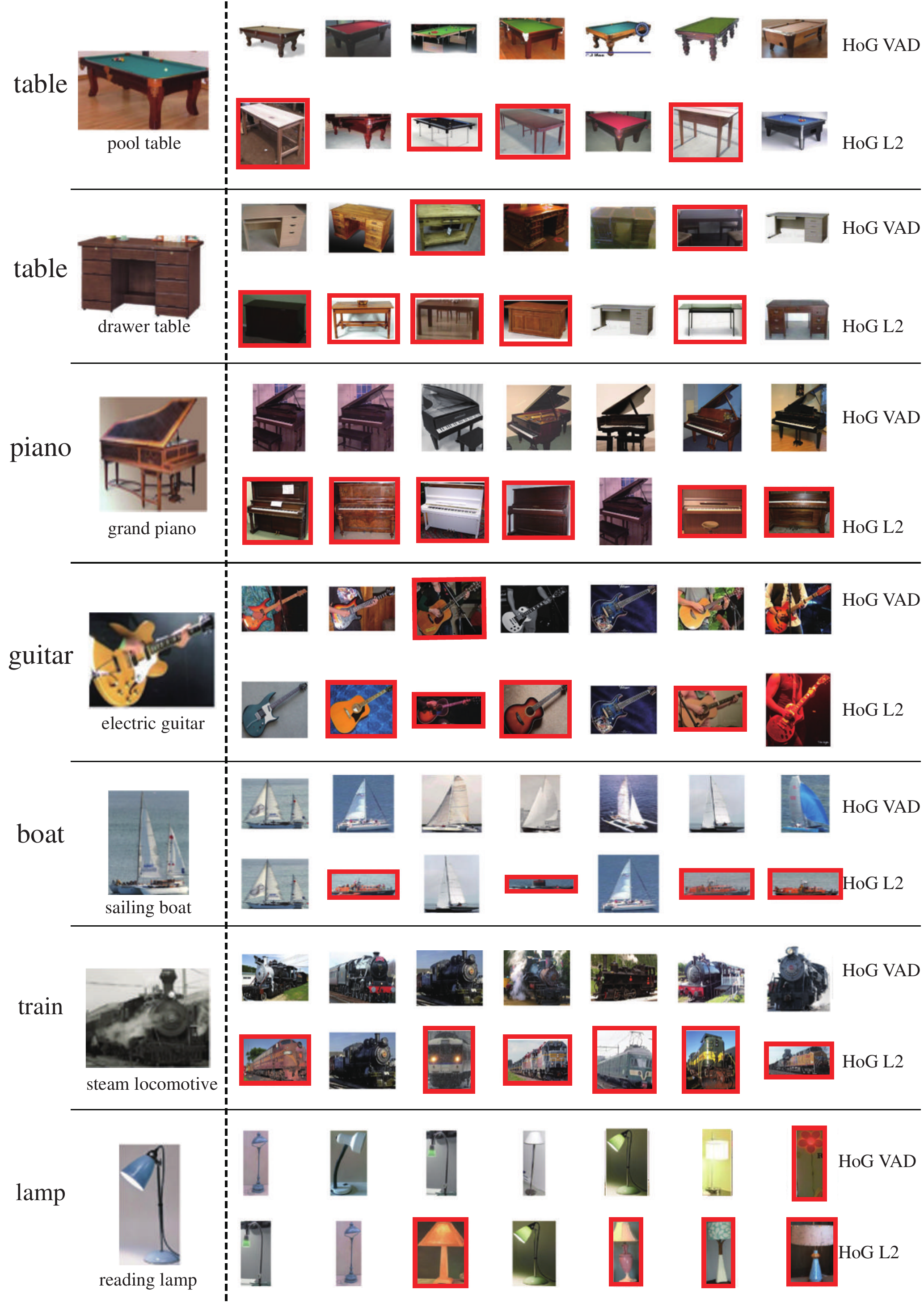}
	\caption{Image retrieval results on 100 classes}
\end{figure}

\end{document}